\documentclass{amsart}

\usepackage{xcolor,enumerate}       
 \definecolor{darkgreen}{rgb}{0,0.5,0}
\usepackage{blindtext}
\usepackage{mathtools}
\usepackage{tikz}
\usetikzlibrary{arrows.meta,calc,positioning,decorations.pathmorphing}

\usepackage[T1]{fontenc}    
\usepackage{hyperref}       
\usepackage{url}            
\usepackage{amsmath,amssymb}
\usepackage{amsmath,mathrsfs,bm}
\usepackage{parskip}
\usepackage{graphicx}
\usepackage{paralist}
\usepackage{hyperref}
\usepackage{bigints}

\newcommand{\oprocendsymbol}{\hbox{$\bullet$}}
\newcommand{\oprocend}{\relax\ifmmode\else\unskip\hfill\fi\oprocendsymbol}

\newtheorem{sublemma}{Sublemma}

\DeclareMathOperator{\rank}{rank}
\DeclareMathOperator{\spanop}{span}
\DeclareMathOperator{\Zeros}{Zeros}
\DeclareMathOperator{\Betti}{Betti}

\newcommand{\map}[3]{#1:#2 \rightarrow #3}

\newcommand{\real}{{\mathbb{R}}}

\newcommand{\locus}[2]{\mathcal{Z}_{#1,#2}}
\newcommand{\Aquad}[1]{\mathcal A_{\mathrm{quad},#1}}

\newtheorem{theorem}{Theorem}[section]

\newtheorem{lemma}[theorem]{Lemma}

\newtheorem{proposition}[theorem]{Proposition}

\newtheorem{definition}[theorem]{Definition}

\numberwithin{equation}{section}

\begin{document}

\begin{abstract}
We show that neural networks with activations satisfying a Riccati-type ordinary differential equation condition, an assumption arising in recent universal approximation results in the uniform topology, produce Pfaffian outputs on analytic domains with format controlled only by the architecture. Consequently, superlevel sets, as well as Lie bracket rank drop loci for neural network parameterized vector fields, admit architecture-only bounds on topological complexity, in particular on total Betti numbers, uniformly over all weights.
\end{abstract}

\title[]{On the Topology of Neural Network Superlevel Sets}

\thanks{The work of the author is supported by the Natural Sciences and Engineering Research Council of Canada.}

\author[Bahman Gharesifard ]{Bahman Gharesifard }
\address{Department of Mathematics and Statistics\\
Queen's University\\
Kingston, ON, Canada}
\email{bahman.gharesifard@queensu.ca}
\urladdr{https://gharesifard.github.io}

\maketitle

\section{Introduction}
\label{sec:intro}

Neural networks are routinely deployed in settings where the salient geometric object is not the raw scalar score
$F:\mathbb R^d\to\mathbb R$ itself, but rather a \emph{threshold region} (or \emph{superlevel set})
\[
\mathcal S_{\ge \tau}(F)\;:=\;\{x\in V:\ F(x)\ge \tau\},
\]
for a fixed threshold $\tau\in\mathbb R$ and a domain $V\subset\mathbb R^d$. Such sets encode global geometric information about the input--output map and can, a priori, exhibit rich topology, such as many connected components and higher-dimensional holes, even when $F$ is smooth.

A substantial literature studies neural-network capacity through geometric and combinatorial surrogates for decision complexity.
One classical line quantifies expressiveness by counting linear
regions and its dependence on depth and width \cite{montufar2014number,raghu2017expressive,naitzat2020topology},
while complementary works emphasize depth-driven oscillation and depth separation phenomena \cite{telgarsky2016benefits}.
A different viewpoint, advocated notably by~\cite{bianchini2014complexity,guss2018characterizing,ergen2024topological}, is to measure classifier complexity by topological invariants of decision regions and to compare shallow and deep
architectures through upper and lower bounds. Our work is closely aligned with this topological perspective, which 
questions whether $\mathcal S_{\ge \tau}(F)$ can become arbitrarily complicated. 

In contrast to the available results in the literature, however, we provide a structural viewpoint, stemming from universal approximation theory, that explains why these topology bounds can be made uniform over all weights once the architecture is fixed. More specifically, we show that for a broad class of smooth activations characterized by a \emph{Riccati-type differential constraint}, the network outputs on any fixed analytic domain lie in a tame Pfaffian class with \emph{format} controlled solely by architectural parameters (depth and widths) and the Riccati index. In line with~\cite{bianchini2014complexity}, the results yield explicit architecture-only bounds on the total Betti number, defined later in this document, of $\mathcal S_{\ge \tau}(F)$ on a fixed analytic domain.

The Riccati hypothesis plays a central role.  It requires that for some $r\in\mathbb N_0$ the $r$-th derivative of the
activation satisfies a quadratic differential equation of Riccati type on its domain of analyticity.  This condition is not ad
hoc: in~\cite{PT-BG:21,PT:BG:22-tac,PT-BG-24-c}, this quadratic differential-equation property on the activation function, or a finite derivative of it, is established as a sufficient structural assumption for universal approximation in the uniform topology for deep residual/flow models. It also turns out that a wide range of commonly used smooth activations satisfy this property.

A second contribution is that the same structural mechanism extends beyond scalar decision regions to geometric loci arising when neural networks parameterize \emph{vector fields}. This situation appears, for instance, when a learning or control update law produces a family of dynamics by varying the network weights, so that each parameter choice yields a different collection of vector fields on a fixed state space. Although we develop the relevant constructions in detail later, we briefly make the setting precise here to make a few points: let us fix $m$ real-analytic vector fields $X_1,\dots,X_m$ on $V\subset\mathbb R^d$, whose coordinate components are realized by networks of a fixed architecture, and let $\Delta^k(z)$ denote the span of all iterated Lie
brackets of $\{X_1,\dots,X_m\}$ of length $ k$. We consider the rank-drop strata
\[
\locus{k}{\rho}=\{z\in V:\ \dim\Delta^k(z)\le \rho\}.
\]
Such sets encode where bracket-generated directions fail to reach a prescribed dimension threshold and are standard in geometric
control~\cite{VJ:97} and sub-Riemannian geometry~\cite{AA-DB-UB:19}. In particular, they play a central role in capturing the set of output reachable by the neural network dynamics. 
We prove that, under the same Riccati activation hypothesis, the topology of these strata
also admits explicit weight-independent bounds depending only on $(d,m,k,\rho)$ and the network architecture.  To the best of the author's knowledge, weight-uniform bounds on Betti numbers for Lie-bracket rank-drop loci arising from neural network parameterized vector fields do not appear in the existing work.

At a technical level, the upshot of the proof is showing that on any analytic domain the network outputs (and, in the control
setting, the vector-field components and the Lie bracket expressions built from them) belong to the class of Pfaffian functions with
a \emph{format} depending only on the architecture and the Riccati index. Classical Pfaffian complexity theorems, similar to the ones used in~\cite{bianchini2014complexity}, then yield
the desired uniform bounds on zeros and Betti numbers~\cite{khovanskiui1991fewnomials,zell1999betti,gabrielov2004betti}.

\section{Neural Network Superlevel Sets: A Topological Problem}\label{sec:NN}
Consider a standard feedforward  neural network with input dimension
$n_0:=d\in\mathbb N$, where $\mathbb N$ is the set of positive integers, depth $L\in\mathbb N$, and layer widths
$n_1,\dots,n_L\in\mathbb N$. For each $\ell\in\{1,\dots,L\}$, let
\[
W^{(\ell)}\in\mathbb R^{n_\ell\times n_{\ell-1}},\qquad
b^{(\ell)}\in\mathbb R^{n_\ell}
\]
denote the weight matrix and bias vector of layer $\ell$. Let $\sigma:\mathbb R\to\mathbb R$
be an activation function, and define its componentwise extension
$\Sigma_\ell:\mathbb R^{n_\ell}\to\mathbb R^{n_\ell}$ by
\[
\Sigma_\ell(z):=\big(\sigma(z_1),\dots,\sigma(z_{n_\ell})\big).
\]
Given an input $x\in\mathbb R^d$, we set $h^{(0)}(x):=x\in\mathbb R^{n_0}$ and define, recursively,
for each $\ell\in\{1,\dots,L\}$,
\begin{equation}\label{eq:nn-vector}
h^{(\ell)}(x)=\Sigma_\ell\!\big(W^{(\ell)}h^{(\ell-1)}(x)+b^{(\ell)}\big)\in\mathbb R^{n_\ell}.
\end{equation}
For simplicity, we consider a scalar output map $F:\mathbb R^d\to\mathbb R$ of the form
\begin{equation}\label{eq:nn-output}
F(x):=c_0+c^\top h^{(L)}(x),
\end{equation}
where $c_0\in\mathbb R$ and $c\in\mathbb R^{n_L}$. The structural parts of what follows extend
componentwise to vector-valued outputs, and the statements below apply to scalar coordinates
or scalar linear combinations thereof.

We now focus on the class of activation functions which we consider in this work. In~\cite{PT:BG:22-tac}, and in the context of universal approximation of deep neural networks, modeled by neural ODEs, the following class of activation functions were shown to be sufficient for universal approximation in uniform topology. We recall this class of activation functions next.
\begin{definition}
\label{def:aquad}
For $r\in\mathbb N_0$, we say that $\sigma:\mathbb R\to\mathbb R$ belongs to $\mathcal A_{\mathrm{quad},r}$ if it is
nondecreasing and there exist constants $a_0,a_1,a_2\in\mathbb R$ with $a_2\neq 0$ such that, defining the Riccati ordinary differential equation \emph{(Riccati ODE)}:
\begin{equation}\label{eq:riccati}
\zeta'(t)=a_0+a_1\zeta(t)+a_2\zeta(t)^2,
\end{equation}
where $t$ is in the open set where the solution is defined, we have
\[
\zeta(t)=\frac{d^{\,r}\sigma}{dt^{\,r}}(t).
\]
We refer to $r$ as the \emph{Riccati index} of $\sigma\in\mathcal A_{\mathrm{quad},r}$.
\end{definition}
In other words, either $ \sigma $ or its $ r$th derivative satisfies the Riccati ODE~\eqref{eq:riccati}. 
Interestingly, a wide class of activation functions used in the literature meets this condition, including the logistic, hyperbolic tangent, and softplus functions. Moreover, other commonly used functions such as ReLU and GeLU can be well approximated within this class.

Clearly, the definition above requires that $\sigma\in C^{r+1} $. Note that solutions of real-analytic ordinary differential equations (ODEs), i.e., an ODE with real-analytic righthand side, are real-analytic on any open interval on which they exist as classical solutions, and anti-derivatives of real-analytic functions are real-analytic. Thus, on any open interval $I$ on which
\eqref{eq:riccati} holds and $\zeta$ is defined, i.e., there is no blow-up, the functions
$\zeta,\sigma^{(j-1)},\dots,\sigma$ are real-analytic on $I$. Unless stated otherwise, all results to follow are stated \emph{locally} on any open domain where all relevant expressions are analytic.

We now define one of the main objectives of interest in this work, postponing the more complicated setting of parametrized vector fields to Subsection~\ref{subsec:control-geom} for ease of read. 
\begin{definition}
\label{def:decision}
For real-valued function $F$ on a domain $V\subseteq\mathbb R^d$, the superlevel set is defined as 
\[
\mathcal{S}_{\ge 0}(F):= \{x\in V:\ F(x)\ge 0\}.
\]
\end{definition}

We now provide an informal statement of one of the problems of interest in this paper, postponing the more involved discussion on parametrized vector fields to later: 

{\bf Problem Statement:} We are interested in understanding the topological structure of the superlevel set of the scalar output function~\eqref{eq:nn-output} of a neural network. For instance, for the binary classification problem, where the \emph{sign} of the real-valued score $F$ to produce labels in $\{+1,-1\}$, we ask whether the set $\mathcal S_{\ge 0}(F)$ can have arbitrarily complicated structure with many connected components. 

\begin{figure}[t]
\centering
\begin{tikzpicture}[>=Latex, line cap=round, line join=round]
\draw[thick] (-6,0) -- (6,0) node[right] {$\mathbb R$};

\draw[thick] (0,-0.14) -- (0,0.14) node[below=4pt] {$0$};
\draw[line width=5pt, gray!25] (0.1,0) -- (6,0);
\draw[thick, rounded corners=18pt] (-5,1.2) rectangle (5,5.2);
\node at (0,5.55) {$V\subset\mathbb R^d$};

\draw[->, very thick] (0,1.05) -- (0,0.25) node[midway, right] {$F$};

\path[fill=gray!18]
  (-4.3,4.4) .. controls (-3.7,5.0) and (-2.8,4.9) .. (-2.2,4.3)
  .. controls (-1.7,3.8) and (-2.0,3.1) .. (-2.8,3.1)
  .. controls (-3.8,3.1) and (-4.6,3.7) .. (-4.3,4.4) -- cycle;
\draw[thick, dashed]
  (-4.3,4.4) .. controls (-3.7,5.0) and (-2.8,4.9) .. (-2.2,4.3)
  .. controls (-1.7,3.8) and (-2.0,3.1) .. (-2.8,3.1)
  .. controls (-3.8,3.1) and (-4.6,3.7) .. (-4.3,4.4) -- cycle;

\path[fill=gray!18]
  (-0.3,4.4) .. controls (0.9,5.2) and (2.8,5.0) .. (3.2,4.0)
  .. controls (3.6,3.1) and (2.4,2.2) .. (1.0,2.3)
  .. controls (-0.7,2.4) and (-1.2,3.6) .. (-0.3,4.4) -- cycle;
\draw[thick, dashed]
  (-0.3,4.4) .. controls (0.9,5.2) and (2.8,5.0) .. (3.2,4.0)
  .. controls (3.6,3.1) and (2.4,2.2) .. (1.0,2.3)
  .. controls (-0.7,2.4) and (-1.2,3.6) .. (-0.3,4.4) -- cycle;

\path[fill=white]
  (1.05,3.65) .. controls (1.55,4.00) and (2.25,3.95) .. (2.50,3.45)
  .. controls (2.75,2.95) and (2.15,2.55) .. (1.45,2.70)
  .. controls (0.80,2.85) and (0.65,3.35) .. (1.05,3.65) -- cycle;
\draw[thick, dashed]
  (1.05,3.65) .. controls (1.55,4.00) and (2.25,3.95) .. (2.50,3.45)
  .. controls (2.75,2.95) and (2.15,2.55) .. (1.45,2.70)
  .. controls (0.80,2.85) and (0.65,3.35) .. (1.05,3.65) -- cycle;

\path[fill=gray!18]
  (3.6,2.5) .. controls (4.4,2.9) and (4.9,2.3) .. (4.5,1.7)
  .. controls (4.1,1.2) and (3.1,1.4) .. (3.1,2.1)
  .. controls (3.1,2.35) and (3.3,2.4) .. (3.6,2.5) -- cycle;
\draw[thick, dashed]
  (3.6,2.5) .. controls (4.4,2.9) and (4.9,2.3) .. (4.5,1.7)
  .. controls (4.1,1.2) and (3.1,1.4) .. (3.1,2.1)
  .. controls (3.1,2.35) and (3.3,2.4) .. (3.6,2.5) -- cycle;

\end{tikzpicture}
\caption{A scalar map $F:V\to\mathbb R$ and its decision region $D=\{F\ge 0\}=F^{-1}([0,\infty))$ are depicted schematically. The dashed curves
represent the zero level sets.}
\label{fig:decision_bundle}
\end{figure}
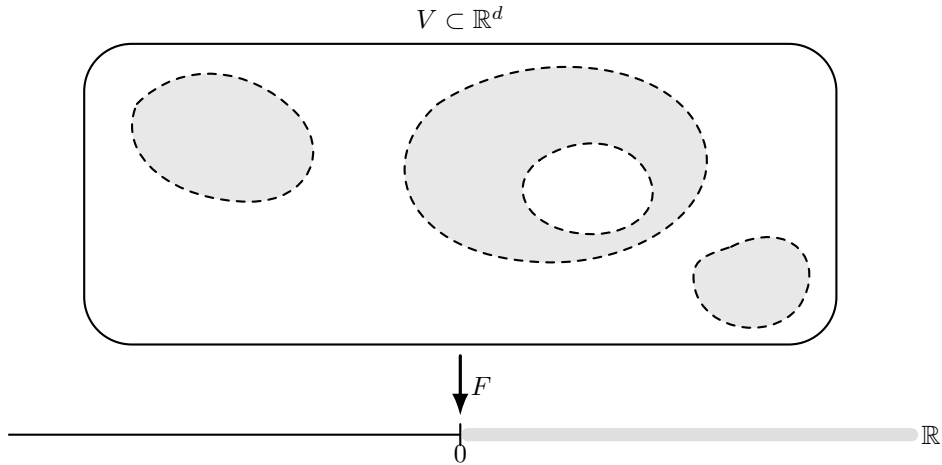

\section{Main Results}
We start by presenting a special instance when $ d=1 $, for reasons that the result is easier to explain and still sets the tone for the results to follow. Let 
$I\subset\real$ be an open interval. Given a real-analytic function $\map{F}{I}{\real} $ which is not identically zero, we denote by 
\[
\Zeros(F;I) := \big|\{x\in I:\ F(x)=0\}\big|,
\]
the number of distinct real zeros in $I$.

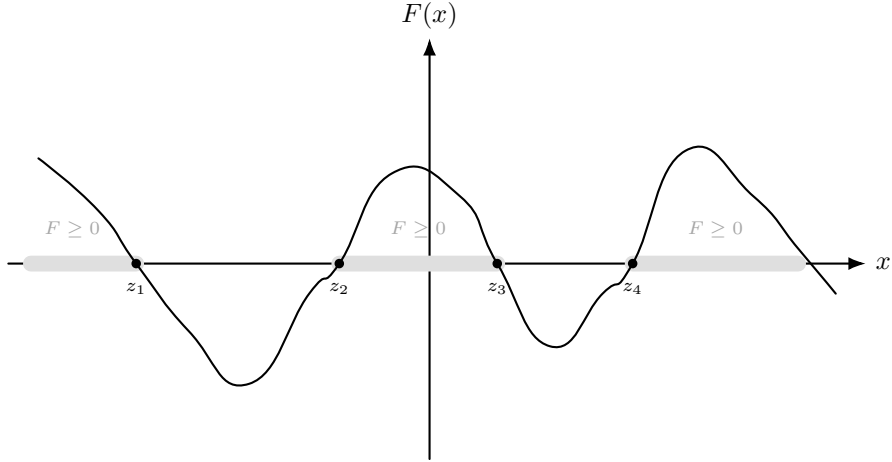
\begin{figure}[t]
\centering
\begin{tikzpicture}[>=Latex, line cap=round, line join=round]

\draw[->,thick] (-5.6,0) -- (5.8,0) node[right] {$x$};
\draw[->,thick] (0,-2.6) -- (0,3.0) node[above] {$F(x)$};

\draw[line width=6pt, gray!25, line cap=round] (-5.3,0) -- (-3.9,0);
\draw[line width=6pt, gray!25, line cap=round] (-1.2,0) -- (0.9,0);
\draw[line width=6pt, gray!25, line cap=round] (2.7,0) -- (4.9,0);

\node[gray!70] at (-4.75,0.45) {\scriptsize $F\ge 0$};
\node[gray!70] at (-0.15,0.45) {\scriptsize $F\ge 0$};
\node[gray!70] at (3.8,0.45) {\scriptsize $F\ge 0$};

\draw[thick]
  plot[smooth, tension=0.9] coordinates
  {(-5.2,1.4) (-4.4,0.7) (-3.9,0.0) (-3.2,-0.9)
   (-2.4,-1.6) (-1.6,-0.4) (-1.2,0.0) (-0.5,1.2)
   (0.4,0.9) (0.9,0.0) (1.6,-1.1) (2.3,-0.4)
   (2.7,0.0) (3.4,1.5) (4.3,0.9) (4.9,0.2) (5.4,-0.4)};

\foreach \x/\lab in {-3.9/$z_1$, -1.2/$z_2$, 0.9/$z_3$, 2.7/$z_4$} {
  \fill (\x,0) circle (1.8pt);
  \node[below=4pt] at (\x,0) {\scriptsize \lab};
}

\end{tikzpicture}
\caption{Assuming analyticity, each change of sign can only occur at a zero of $F$, so the number of interval components is controlled by $\Zeros(F;I)$.}
\label{fig:1d_zeros_intervals}
\end{figure}

\begin{proposition}
    \label{prop:deep_zeros_main}
    Suppose that $ d=1 $ and that $F:I\to\mathbb R$ be the output of a depth-$L$ neural network as in~\eqref{eq:nn-vector} with widths $n_1,\dots,n_L$ and $\sigma\in\Aquad{r}$, on an open interval $I$ where $F$ is analytic. Let
    \begin{equation}\label{eq:R}
    R := (r+2)\sum_{\ell=1}^{L} n_\ell.    
    \end{equation}    
    Then either $F\equiv 0$ on $I$, or 
\[
\Zeros(F;I)\ \le\ 2^{\frac{R(R+1)}{2}}C_I(1+L)^{R+1},
\]
    where $C_I>0$ is a constant depending on the interval $I$, but independent of the network weights and biases. In particular, the superlevel set $\mathcal{S}_{\ge 0}(F)$ is a union of at most
    $
 2^{\frac{R(R+1)}{2}}C_I(1+L)^{R+1}
$ intervals, uniformly over all weight values.
\end{proposition}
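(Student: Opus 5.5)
The plan is to show that $F$ is a Pfaffian function on $I$ whose chain length and degrees depend only on $(r,L,n_1,\dots,n_L)$, and then to invoke Khovanskii's zero bound for Pfaffian functions of one variable \cite{khovanskiui1991fewnomials}. Recall that a Pfaffian chain $f_1,\dots,f_R$ of degree $\alpha$ on $I$ satisfies $f_j'=P_j(x,f_1,\dots,f_j)$, where each $P_j$ is a polynomial of degree at most $\alpha$. In dimension one, a Pfaffian function $q(x,f_1,\dots,f_R)$ of degree $\beta$ has at most $2^{R(R-1)/2}\beta(\alpha+\beta)^R$ isolated zeros, in the domain-restricted form with an $I$-dependent constant when $I$ is not the whole line. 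Once $\alpha$ and $\beta$ are shown to be $O(1+L)$ up to architecture-only factors, the stated bound $2^{R(R+1)/2}C_I(1+L)^{R+1}$ follows by absorbing the slack in the exponent of $2$ into the constant. The interval statement is then immediate: $F$ is analytic and not identically zero, so its zeros are isolated and finite in number by the bound. Between consecutive zeros $F$ has constant sign, so $\{F\ge0\}$ is a union of at most $\Zeros(F;I)+1$ intervals, and this count is absorbed into the constant.

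The key construction is the chain, built layer by layer. For a neuron $i$ in layer $\ell$, write $u=W^{(\ell)}_ih^{(\ell-1)}(x)+b^{(\ell)}_i$, an affine combination of the previous layer's outputs. Add to the chain the functions $\sigma^{(r)}(u),\sigma^{(r-1)}(u),\dots,\sigma(u)$, together with one extra auxiliary function per neuron, for example $u'$ or a reciprocal needed to keep the triangular structure. This gives at most $r+2$ functions per neuron, which is where $R=(r+2)\sum_\ell n_\ell$ comes from. By the chain rule, $\frac{d}{dx}\sigma^{(j)}(u)=\sigma^{(j+1)}(u)\,u'$. The top derivative satisfies the Riccati equation $\frac{d}{dx}\sigma^{(r)}(u)=(a_0+a_1\sigma^{(r)}(u)+a_2\sigma^{(r)}(u)^2)\,u'$. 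Here $u'$ is a linear combination of derivatives of previous-layer activations, which are themselves products of earlier chain elements. Ordering the chain by layer and, within each neuron, from $\sigma^{(r)}(u)$ down to $\sigma(u)$ makes every derivative a polynomial in $x$ and in earlier or current chain elements. This is the required triangular form, and the weights enter only as coefficients, never in the degrees.

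The degree count is the step I expect to be the main obstacle. Set $D_\ell$ for the degree of $\frac{d}{dx}h^{(\ell)}_i$ as a polynomial in the chain. Then $D_0=0$, since $x'=1$, and roughly $D_\ell\le D_{\ell-1}+1$ for lower derivatives, or $+2$ from the quadratic Riccati term. So the chain degree $\alpha$ grows linearly in $\ell$ and stays $O(1+L)$, and the output $F$ is linear in the last-layer chain elements, so $\beta=1$. I would first verify that the recursion for $u'$ really stays linear in depth rather than multiplying; if a naive accounting makes it multiply, I would introduce the derivative $u'$ itself as a chain element, which is the role of the $(r+2)$nd function per neuron. A final check is that the weight-independence of $C_I$ is genuine. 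Khovanskii's bound depends only on the format $(R,\alpha,\beta)$ and on the domain, through the number of components of the domain or a bound on the coefficient $x$-degree, both of which are fixed by $I$. The weights appear only as real coefficients of the polynomials, so they do not affect the bound.
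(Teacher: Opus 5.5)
Your chain construction and degree count are the paper's. The paper's $(r+2)$-nd function per neuron is simply the pre-activation $s^{(\ell)}_k$. No $u'$ or reciprocal is needed: $u'$ enters each derivative as a single factor, so degrees add rather than multiply. This gives $\alpha=1+2L$ in the paper and $\alpha\le 2L$ with your recursion.

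Where you genuinely differ is the counting theorem. The paper views $Z=\{F=0\}$ as a semi-Pfaffian set with $s=1$ and applies the Betti bound, Theorem~\ref{thm:pf-betti}. Then $\Zeros(F;I)=b_0(Z)$ counts every distinct zero regardless of multiplicity, and the exponent $d+R=1+R$ of that theorem is the source of the factor $(1+L)^{R+1}$.

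You use Khovanskii's estimate instead. It saves one power of $(1+L)$, but as usually stated it counts only \emph{non-degenerate} zeros. A zero where $F$ touches $0$ without changing sign is degenerate, yet it counts in $\Zeros(F;I)$. If $F\le 0$ nearby, it is also an isolated point of $\mathcal S_{\ge 0}(F)$. So the bound you quote for ``isolated zeros'' needs one more line.

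For a generic small $\varepsilon>0$ (a regular value of $F^2$, by Sard), any $N$ distinct zeros of $F$ produce $2N$ non-degenerate zeros of $F^2-\varepsilon$. This is Pfaffian in the same chain with $\beta=2$. Hence $N\le 2^{R(R-1)/2}(\alpha+2)^R\le 2^{R(R+1)/2}(1+L)^R$, and this also proves finiteness of $Z$.

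This step uses your explicit $\alpha\le 2L$, so state it rather than ``$\alpha=O(1+L)$ up to architecture-only factors''. A factor $c^R$ with $c>2$ cannot be absorbed into $C_I$. With $\alpha+\beta\le 2(1+L)$, no absorption is needed at all: the $2^R$ coming from $(2+2L)^R$ is exactly what turns $2^{R(R-1)/2}$ into $2^{R(R+1)/2}$.

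In short, your route gives a slightly sharper, purely one-variable count at the cost of the degenerate-zero argument. The paper's route handles degenerate zeros automatically and reuses the same theorem as the proof of Theorem~\ref{thm:nn_betti_s1}.
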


Some remarks are in order. In one dimension, a natural notion of classifier \emph{expressivity} is related to the number of label flips along the input line, i.e., the number of sign changes of~$F$. Several works (e.g.~\cite{raghu2017expressive}) quantify expressivity in terms of how strongly a network warps a one-dimensional input path, for instance, via the arc length of the induced representation trajectory.  In contrast, the one-dimensional result above directly controls a \emph{topological} notion of complexity along the input line: the number of label flips of $x\mapsto \mathrm{sign}(F(x))$, equivalently the number of zeros of $F$, and hence the number of interval components of $\{F\ge 0\}$. 
This complementary topological viewpoint has already been advocated in the literature; see, e.g.,~\cite{guss2018characterizing,ergen2024topological}, and should be viewed as a \emph{worst-case} expressivity bound; it upper-bounds the maximal possible oscillation over all parameter choices for a given architecture.

The bound above is \emph{uniform in the weights}: for a fixed architecture and $\sigma\in\Aquad{r}$, no choice of parameters can produce arbitrarily many sign changes on $I$. This uniformity also explains why the bound is expected to be large: it is required to hold simultaneously for \emph{every} parameter choice, rather than describing the typical behavior of a trained network. As stated in the introduction, the result shows that the type of topological control observed in~\cite{bianchini2014complexity} extends to all activation functions that satisfy the Riccati-type ODE~\eqref{eq:riccati} for some Riccati index $r$. As also noted in~\cite{bianchini2014complexity}, increasing the depth $L$ can substantially increase worst-case expressivity, since it enlarges the base of the exponential term.

At the risk of stating the obvious, we emphasize that this notion of complexity is different from statistical capacity measures such as VC dimension: here we control a \emph{global} geometric feature of the decision region rather than shattering of finite samples. Finally, the hypothesis that $F$ is real-analytic on $I$ is ensured if, for every layer $\ell$ and neuron $k$, we require that the quantities involved take values in an open interval on which $\sigma$ is real-analytic, i.e., away from any singularities or blow-up points of the Riccati solution (if any); we implicitly restrict to such intervals throughout.

Of course, the number of connected components does not capture all topological features of a superlevel set. We therefore use the Betti numbers, see Definition~\ref{def:betti}, which quantify the topology of a set across dimensions, measuring for instance the number of components and the presence of higher-dimensional holes.

We are now in a position to state a generalization of Proposition~\ref{prop:deep_zeros_main}. 
\begin{theorem}
\label{thm:nn_betti_s1}
Suppose that $d\ge 1$ and that $F:V\to\mathbb R$ is the output of a depth-$L$ neural network
as in~\eqref{eq:nn-vector}-\eqref{eq:nn-output} with widths $n_1,\dots,n_L$ and $\sigma\in\Aquad{r}$.
Assume $V\subset\mathbb R^d$ is an open domain on which $F$ is real-analytic, and let $R$ be as in~\eqref{eq:R}. 
Then 
\[
\Betti\!\big(\mathcal S_{\ge 0}(F)\big)\ \le\ \mathfrak B_V(d,R,L),
\]
where 
\[
\mathfrak B_V(d,R,L):=2^{\frac{R(R-1)}{2}}\; C_V\;
\Big(d+\min\{d,R\}(1+2L)\Big)^{d+R}.
\]
Here $C_V>0$ is a constant depending on the open domain $V$, but independent of the network
weights and biases. In particular, the number of connected components of the decision region is at most 
$ \mathfrak B_V(d,R,L)$. 
\end{theorem}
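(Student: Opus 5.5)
The plan is to show that $F$ is a Pfaffian function on $V$ whose format depends only on the architecture. Once that is established, we invoke Zell's bound \cite{zell1999betti} (in the form of \cite{gabrielov2004betti}) for the total Betti number of a semi-Pfaffian set $\{x\in V: F(x)\ge 0\}$ defined by one non-strict inequality. The weights never enter this format, so the resulting bound is uniform over all parameters.

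The first step is to build the Pfaffian chain. For each layer $\ell$ and neuron $k$, write $u^{(\ell)}_k(x):=(W^{(\ell)}h^{(\ell-1)}(x)+b^{(\ell)})_k$ for the preactivation. Attach to this neuron the $r+2$ functions
\[
\sigma(u^{(\ell)}_k),\ \sigma'(u^{(\ell)}_k),\ \dots,\ \sigma^{(r)}(u^{(\ell)}_k),\ \text{and one auxiliary function},
\]
ordered layer by layer. This gives chain length $R=(r+2)\sum_\ell n_\ell$, as in \eqref{eq:R}. The key computation is the partial derivative
\[
\partial_{x_i}\sigma^{(j)}(u^{(\ell)}_k)=\sigma^{(j+1)}(u^{(\ell)}_k)\,\partial_{x_i}u^{(\ell)}_k .
\]
For $j<r$, the factor $\sigma^{(j+1)}$ is a later chain element attached to the same neuron; I order the block within each neuron in reverse, from $\sigma^{(r)}$ down to $\sigma$, so that it comes first. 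For $j=r$, the Riccati equation \eqref{eq:riccati} gives $\sigma^{(r+1)}=a_0+a_1\sigma^{(r)}+a_2(\sigma^{(r)})^2$, a polynomial of degree $2$. The factor $\partial_{x_i}u^{(\ell)}_k=\sum_m W^{(\ell)}_{km}\partial_{x_i}h^{(\ell-1)}_m$ is, by induction, a polynomial in chain functions of earlier layers. The weights enter only as coefficients, never in the degree. Each further layer composes one more factor, so the degree of the derivative polynomials grows linearly in depth, with $\alpha\le 1+2L$ or similar. The auxiliary slot per neuron absorbs the bookkeeping, for instance through a triangular ordering, and keeps the count at $r+2$.

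The output $F=c_0+c^\top h^{(L)}$ is then degree $\beta=1$ in the chain. Analyticity on $V$ holds by hypothesis; this is exactly where the restriction away from Riccati blow-up is used.

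Next I apply the Zell--Gabrielov--Vorobjov bound. For a set defined by $s=1$ sign condition on Pfaffian functions of degree $\beta$, with chain length $R$ and chain degree $\alpha$, in $d$ variables, the total Betti number is at most
\[
2^{R(R-1)/2}\,O\big(d+\beta+\min\{d,R\}\alpha\big)^{d+R}.
\]
Inserting $\beta=1$ and $\alpha=1+2L$ gives $\big(d+\min\{d,R\}(1+2L)\big)^{d+R}$ up to constants. The constant $C_V$ is whatever that theorem requires for the domain. If $V$ is not itself semi-Pfaffian, I would first restrict to a suitable relatively compact semi-Pfaffian exhaustion (or a ball) and take the limit, using that homology commutes with direct limits. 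The bound on connected components is then the statement $b_0\le\Betti$.

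The main obstacle is the degree bookkeeping: proving that $\alpha\le 1+2L$ exactly. Without care, the product structure of the chain rule would make the degree grow in a way that depends on $r$ and on the widths. To handle this, I would first try to show by induction on $\ell$ that every $\partial_{x_i}u^{(\ell)}_k$ has degree at most $2\ell-1$ in the chain. The reasoning is that each layer multiplies by at most one new chain factor, plus the quadratic Riccati term, which is what forces the factor $2$. If that fails, I would accept a slightly larger $\alpha$, since the architecture-only uniformity of the bound is unaffected.
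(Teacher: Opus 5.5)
Your proposal is correct and follows the paper's proof essentially step for step. The paper also builds, for each neuron, the block $(s^{(\ell)}_k,\sigma^{(r)}(s^{(\ell)}_k),\dots,\sigma(s^{(\ell)}_k))$, ordered layer by layer with the derivatives in reverse order, and closes it with the Riccati equation. It then proves by induction that $\deg\partial_p s^{(\ell)}_k\le 2\ell-1$, which is exactly your planned bound on the preactivation derivative, giving $\alpha=1+2L$; with $\beta=1$ it applies the Zell/Gabrielov--Vorobjov bound with $s=1$.

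Two small corrections:
\begin{itemize}
\item Your unspecified ``auxiliary function'' is, in the paper, just the preactivation $s^{(\ell)}_k$ (your $u^{(\ell)}_k$) placed at the head of each block. It is not actually needed: only $\partial_p s^{(\ell)}_k$ enters the chain rule, and a shorter chain only lowers the bound.
\item The cited bound has $d\beta$, not $d+\beta$. Substituting $\beta=1$ therefore gives exactly $\big(d+\min\{d,R\}(1+2L)\big)^{d+R}$, with no adjustment of constants.
\end{itemize}
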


\subsection{Control Geometry}\label{subsec:control-geom}
In many settings of interest, neural networks are used to parameterize
vector fields (or controlled dynamics) on a state space. Concretely, fixing an architecture and varying the weights yields
a \emph{family} of vector fields. For example, one may have $m$ vector fields $X_1,\dots,X_m$ on $V\subset\mathbb R^d$ whose
coordinate functions $X_{i,p}$ are realized by neural networks; this for instance is instrumental when one studies universal
approximation properties~\cite{PT-BG:21}. A basic geometric object attached to such a family is the span of the fields together with their iterated Lie brackets: at each point $z$, the bracket span $\Delta^k(z)$ captures the directions that can be generated by combining the vector fields up to bracket length $k$. The dimension of $\Delta^k(z)$ is a local measure of how many directions the system can produce near $z$, and the rank-drop locus $\locus{k}{\rho}$ records where this span fails to reach a prescribed dimension threshold.

{\bf Relation to the superlevel-set viewpoint.}
In the classification setting studied earlier, one quantifies worst-case classifier complexity via the topology of a threshold
region defined by a single scalar score function $F$. In the control setting considered here, the corresponding geometric
objects are the rank-drop loci
\begin{equation}\label{eq:locusrho}
\locus{k}{\rho}=\{z\in V:\ \dim\Delta^k(z)\le \rho\},
\end{equation}
where $\Delta^k(z)$ is the span of all Lie brackets of the vector fields $X_1,\dots,X_m$ of length at most $k$. In this sense,
the viewpoint remains the same, but the measured quantity changes: we measure the failure, at a prescribed bracket depth $k$, to
generate sufficiently many independent directions.

Let $V\subset\mathbb R^d$ be open and let $X_1,\dots,X_m$ be real-analytic vector fields on $V$,
\[
X_i(z)=\sum_{p=1}^{d} X_{i,p}(z)\,\partial_{z_p}.
\]
Fix $k\in\mathbb N$ and let $\mathcal B_k$ be the finite set of all iterated Lie brackets of length up to $k$
built from $\{X_1,\dots,X_m\}$. Define
\[
\Delta^k(z):=\spanop\{Y(z):\ Y\in\mathcal B_k\}\subset \mathbb R^d,
\]
and let $A_k(z)$ be the $d\times |\mathcal B_k|$ matrix whose columns are the coordinate vectors of $Y(z)$.
For $\rho\in\{0,1,\dots,d\}$, the rank-drop locus in~\eqref{eq:locusrho} can be written as
\[
\locus{k}{\rho}=\{z\in V:\ \rank A_k(z)\le \rho\}.
\]
Let $M_1,\dots,M_{s_{k,\rho}}$ denote the list of all $(\rho+1)\times(\rho+1)$ minors of $A_k$, where
\[
s_{k,\rho}:=\binom{d}{\rho+1}\binom{|\mathcal B_k|}{\rho+1}.
\]

\begin{theorem}
\label{thm:nn_rankdrop_main}
Assume that each coefficient function $X_{i,p}:V\to\mathbb R$ is realized on $V$ by a feedforward neural network of fixed
architecture $(d,L,n_1,\dots,n_L)$ with activation $\sigma\in\Aquad{r}$, and that all such coefficient
functions are real-analytic on $V$. Set
\[
s_{k,\rho}:=\binom{d}{\rho+1}\binom{|\mathcal B_k|}{\rho+1}.
\]
Then
\[
\Betti(\locus{k}{\rho})\ \le\ \mathfrak R_V\!\Big(d,m,k,\rho;\,L,n_1,\dots,n_L,r\Big),
\]
where
\[
\mathfrak R_V\!\Big(d,m,k,\rho;\,L,n_1,\dots,n_L,r\Big)
:=
2^{\frac{R_k(R_k-1)}{2}}\; (s_{k,\rho})^{d}\; C_V\;\Big(d\beta_k+\min\{d,R_k\}\alpha_k\Big)^{d+R_k},
\]
where $C_V>0$ is a constant depending on the fixed domain $V$,
independent of all network weights and biases, and $R_k,\alpha_k,\beta_k$ are constants uniformly valid for all minors $M_i$,
$i\in \{1,\ldots,s_{k,\rho}\}$.
\end{theorem}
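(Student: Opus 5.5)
The idea is to reduce Theorem~\ref{thm:nn_rankdrop_main} to the same Pfaffian machinery behind Theorem~\ref{thm:nn_betti_s1}. The locus $\locus{k}{\rho}$ is the common zero set of the minors $M_1,\dots,M_{s_{k,\rho}}$. So it suffices to show three things: each minor is a Pfaffian function on $V$ with a format that does not depend on the weights; the format $(R_k,\alpha_k,\beta_k)$ (chain length, degree of the chain polynomials, degree of the defining polynomial) can be chosen uniformly for all $i$; and a Zell/Gabrielov--Vorobjov type bound for Betti numbers of Pfaffian sets defined by $s$ equations then yields the stated expression with the factor $(s_{k,\rho})^d$.

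\textbf{Step 1: a Pfaffian chain for the network.} As in the proof of Theorem~\ref{thm:nn_betti_s1}, for each neuron $(\ell,j)$ and preactivation $u_{\ell j}(z)=W^{(\ell)}_j h^{(\ell-1)}(z)+b^{(\ell)}_j$, adjoin to the chain the functions
\[
\sigma(u_{\ell j}),\ \sigma'(u_{\ell j}),\ \dots,\ \sigma^{(r)}(u_{\ell j}),
\]
together with one extra function to handle $\zeta=\sigma^{(r)}$ through the Riccati relation. This gives $r+2$ functions per neuron, which matches~\eqref{eq:R}. Each derivative $\partial_{z_p}\sigma^{(j)}(u_{\ell j})=\sigma^{(j+1)}(u_{\ell j})\,\partial_{z_p}u_{\ell j}$ is a polynomial in earlier chain members, with $\sigma^{(r+1)}=a_0+a_1\zeta+a_2\zeta^2$. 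The degree grows by a bounded amount per layer, and this is where the factor $(1+2L)$ in the chain degree comes from. The $d\cdot m$ coefficient networks $X_{i,p}$ share this architecture but have different weights. The first option is therefore to concatenate their chains, replacing $R$ by $dmR$; if needed, $R_k$ is taken to be that quantity, which is still independent of the weights.

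\textbf{Step 2: brackets and minors.} Every first derivative of a chain function is a polynomial in the chain, so the ring $\mathbb R[z,f_1,\dots,f_{R_k}]$ is closed under $\partial_{z_p}$. Each differentiation raises the degree by at most $\alpha$, the chain degree. A bracket of length $j$ has components obtained from at most $j-1$ rounds of products of components with first derivatives. By induction, the components of any $Y\in\mathcal B_k$ are polynomials in $(z,f)$ of degree at most $\beta^{(j)}\le c_1\, j\,(\deg X+\alpha)$ or similar. A $(\rho+1)\times(\rho+1)$ minor then has degree at most $\beta_k:=(\rho+1)\max_{j\le k}\beta^{(j)}$. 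The chain degree is unchanged, so $\alpha_k=1+2L$, or the corresponding degree of the concatenated chain. The same bound holds for all $i$, which gives the claimed uniformity.

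\textbf{Step 3: counting.} For the Betti bound on a Pfaffian set $\{M_1=\dots=M_s=0\}$ in a domain $V$, I would first replace the system by the single equation $\sum_i M_i^2=0$; this is equivalent and only doubles the degree. That reduces the statement directly to the single-function bound used for Theorem~\ref{thm:nn_betti_s1}, with degree $2\beta_k$. To obtain the stated form with $(s_{k,\rho})^d$, I would instead invoke the Gabrielov--Vorobjov bound for sets defined by $s$ equations and inequalities, which carries an $s^d$ factor. The domain constant $C_V$ is inherited exactly as before, by working on a relatively compact analytic subdomain or through the Pfaffian-domain hypothesis.

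\textbf{Main obstacle.} I expect the hardest part to be the degree bookkeeping in Step 2, specifically making sure the Lie bracket degrees do not depend on the weights. This needs a check that weights and biases enter only as coefficients and never raise degrees, which holds because $u_{\ell j}$ is linear in $h^{(\ell-1)}$. A secondary concern is the need to combine the chains of the $dm$ different networks, which forces $R_k$ to scale with $dm$.
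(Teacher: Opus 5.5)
Your proposal is correct and follows the paper's proof. You apply Proposition~\ref{prop:deep-pfaff} to each of the $dm$ coefficient networks, use closure of Pfaffian functions under products and differentiation (Lemmas~\ref{lem:pfaff-closure} and~\ref{lem:bracket-pfaff}) for the brackets and minors, and then invoke Theorem~\ref{thm:pf-betti} with $s=s_{k,\rho}$ equations. Your explicit choices $R_k=dmR$, $\alpha_k=1+2L$ (no chain extension is needed, since the polynomial ring in the chain is closed under $\partial_{z_p}$), and $\beta_k$ of order $(\rho+1)k\alpha$ make concrete the weight-independent uniform format that the paper leaves implicit; note only that the extra per-neuron chain element in the paper is the preactivation $s^{(\ell)}_k$, not anything forced by the Riccati relation.
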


Theorem~\ref{thm:nn_rankdrop_main} provides weight-independent control of the geometric complexity of the rank-drop strata
$\locus{k}{\rho}$. Note that since
$\mathcal B_k\subseteq\mathcal B_{k+1}$, one has $\Delta^k(z)\subseteq\Delta^{k+1}(z)$ for all $z\in V$, and therefore
\begin{equation}\label{eq:locus-nested}
\locus{k+1}{\rho}\subseteq \locus{k}{\rho},\qquad\text{for all }k,\rho,
\end{equation}
so increasing $k$ can only shrink the degeneracy set. The parameter $k$ should be viewed as a chosen bracket depth at which one
measures rank generation; however, different weight choices may reach full rank at different depths.

\section{Proof of main results}\label{sec:proof-main}

We prove the main results of this paper, after providing some mathematical preliminaries

\subsection{Mathematical Preliminaries}
\label{subsec:aux-pfaff}

The key idea behind our main proofs comes from the fact that under our assumptions on the class of activation functions, the neural network output functions behave in a tame manner, in that they fall into the class of  ``Pfaffian functions'', which we recall shortly, see~\cite{khovanskiui1991fewnomials,gabrielov2004betti,zell1999betti}. We then rely on some classical results on the topological properties of superlevel sets defined by this class of functions to conclude the results. We start with defining this class of functions. 

\begin{definition}
\label{def:pfaff}
Let $V\subset\mathbb R^d$ be an open subset. We say that $(f_1,\ldots,f_R)$, 
where each $ \map{f_i}{V}{\real} $ is real-analytic on $ V $, forms a \emph{Pfaffian chain of degree $\alpha$} if for every $ i \in \{1,\ldots, R\} $ and every coordinate index $p\in\{1,\dots,d\}$ there exists a polynomial
$P_{i,p}(x,y_1,\dots,y_i)$ of total degree at most $ \alpha$ such that for all $ x \in V $,
\[
\frac{\partial f_i}{\partial x_p}(x)=P_{i,p}\big(x,f_1(x),\dots,f_i(x)\big). 
\]
A real-analytic function $F:V\to\mathbb R$ is \emph{Pfaffian of format $(d,R,\alpha,\beta)$} if
\[
F(x)=Q\big(x,f_1(x),\dots,f_R(x)\big),
\]
for some Pfaffian chain $(f_1,\dots,f_R)$ of degree $\alpha$ and some polynomial $Q$ of degree at most~$\beta$.
\end{definition}

We provide an example for clarification. Let $V=\mathbb R$ and define $f_1(x)=e^x$. Then
\[
\frac{d f_1}{dx}(x)=e^x=f_1(x)
\]
and henceforth, $(f_1)$ is a Pfaffian chain of degree $\alpha=1$. Therefore, any polynomial
\[
F(x)=Q\big(x,f_1(x)\big)=Q(x,e^x)
\]
is Pfaffian on $\mathbb R$; for instance, $F(x)=x^2+e^x$ is Pfaffian of format $(1,1,1,2)$.

Euler's Gamma function is an example of a real-analytic function which is not Pfaffian on any open subinterval of the positive reals, see~\cite{freitag2021not}.

We are interested in subsets defined by a collection of Pfaffian functions, as we outline next. 
\begin{definition}
\label{def:semipf}
Let $V\subset\mathbb R^d$ be open. A set $S\subset V$ is called \emph{semi-Pfaffian on $V$} if there exist Pfaffian functions
$G_1,\dots,G_s:V\to\mathbb R$  such that $S$ can be obtained from sets of the form
\[
\{x\in V:\ G_i(x)=0\},\qquad \{x\in V:\ G_i(x)\ge 0\},\quad \mathrm{and} \quad \{x\in V:\ G_i(x)\le 0\},
\]
where $i\in \{1,\dots,s\}$, by taking finitely many unions and intersections.
\end{definition}
We point out here that the definition of semi-Pfaffian in the literature is more general than what is presented in here~\cite{zell1999betti, gabrielov2004betti}, which suffices for our purposes. One instance related to this work is that when $G:V\to\mathbb R$ is Pfaffian, where $V\subset\mathbb R^d$ is an open set,
the superlevel set $\{x\in V:\ G(x)\ge 0\}$ is semi-Pfaffian on $V$. 

Finally, we recall the definition of the Betti numbers of a topological space. We refer the reader to~\cite[Chapter~2]{AH:02} for a full account. 
\begin{definition}
\label{def:betti}
Let $S\subset\mathbb R^d$ be a topological space. For each $i\ge 0$, the $i$-th Betti number of $S$
with $\mathbb Z_2$ coefficients is
\[
b_i(S):=\dim_{\mathbb Z_2} H_i(S;\mathbb Z_2),
\]
where $ H_i(S;\mathbb Z_2) $ is the $i$th singular homology group. 
The \emph{total Betti number} is
\[
\Betti(S):=\sum_{i=0}^{d} b_i(S).
\]
\end{definition}

We recall the next important classical result from~\cite[Theorem~3.4]{gabrielov2004complexity}. 

\begin{theorem}\label{thm:pf-betti}
Let $V\subset\mathbb R^d$ be an open domain and let $S\subset V$ be a semi-Pfaffian set described using $s$ Pfaffian
functions on $V$ of format $(d,R,\alpha,\beta)$. Then
\[
\Betti(S)\ \le\ 2^{\frac{R(R-1)}{2}}\; s^{d}\; C_V\;
\Big(d\beta+\min\{d,R\}\alpha\Big)^{d+R},
\]
where $C_V>0$ is a constant depending only on the fixed domain $V$ and the Pfaffian chain,
and independent of the coefficients/parameters in the Pfaffian functions defining $S$.
\end{theorem}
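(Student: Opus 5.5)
The plan is to reduce the Betti number count to counting nondegenerate solutions of square Pfaffian systems. Morse theory and a Mayer--Vietoris argument perform the reduction, and Khovanski\u{\i}'s fewnomial theorem~\cite{khovanskiui1991fewnomials} supplies the count, following the strategy of~\cite{zell1999betti,gabrielov2004betti,gabrielov2004complexity}. The basic counting input I would take is Khovanski\u{\i}'s bound: for Pfaffian functions $H_1,\dots,H_d$ on $V$ of format $(d,R,\alpha,\beta)$ built on a common chain, the number of nondegenerate common zeros in $V$ is at most
\[
2^{\frac{R(R-1)}{2}}\,C_V\,\beta^{d}\big(\min\{d,R\}\alpha+d\beta\big)^{R}.
\]
Here $C_V$ absorbs the dependence on the domain, e.g.\ the number of Rolle-type ``separating'' leaves of the chain on $V$. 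Since the chain and $V$ are fixed and only the polynomial coefficients vary, this constant is independent of the parameters.

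\emph{Single inequality.} First I treat $S=\{G\ge 0\}$ with one Pfaffian $G$.
\begin{itemize}
\item Replace $V$ by a relatively compact exhaustion $V_\delta$ with smooth boundary, cut out by one further polynomial or Pfaffian inequality of bounded degree.
\item Replace $S$ by $S_\varepsilon=\{G\ge \varepsilon\}\cap \overline{V_\delta}$ for a small regular value $\varepsilon$.
\item Use the standard Gabrielov--Vorobjov limit argument: homology commutes with the monotone limits $\varepsilon\to0$, $\delta\to0$ for definable families in an o-minimal structure, so $\Betti(S)\le\limsup\Betti(S_\varepsilon)$.
\end{itemize}
For the smooth compact manifold with boundary $S_\varepsilon$, Morse theory applied to a generic linear function $\ell$ bounds $\Betti(S_\varepsilon)$ by the number of critical points of $\ell$ on $\{G=\varepsilon\}$ and on the boundary strata. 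These critical points are the nondegenerate solutions of systems such as $G=\varepsilon$ together with $\partial_{x_p}G\,\ell_q-\partial_{x_q}G\,\ell_p=0$. Differentiating $G$ raises the degree by at most $\alpha$ in the chain variables, so each equation has degree at most $\beta+\alpha$. Plugging these systems into the Khovanski\u{\i} bound gives the factor $\big(d\beta+\min\{d,R\}\alpha\big)^{d+R}$, after merging $\beta^d$ with the other powers.

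\emph{Boolean combinations of $s$ functions.} Next I handle a general semi-Pfaffian $S$ described by $G_1,\dots,G_s$. I would use the Mayer--Vietoris/sign-condition technique of Basu--Pollack--Roy, adapted to Pfaffian sets in~\cite{zell1999betti}. The total Betti number of any such Boolean combination is bounded by a sum, over subsets $J\subset\{1,\dots,s\}$ with $|J|\le d$, of Betti numbers of sets $\{G_j=\pm\varepsilon_j,\ j\in J\}$ for suitably generic small $\varepsilon_j$. Subsets with $|J|>d$ contribute nothing by genericity. There are at most $\sum_{i\le d}\binom{s}{i}4^{i}=O(s^d)$ terms. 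Each term is a smooth compact complete intersection, so the critical-point count of the previous step bounds it. This produces the $s^{d}$ factor, and the remaining constants go into $C_V$.

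\emph{Main obstacle.} I expect the hardest part to be the non-compact, non-generic reduction. This means justifying, uniformly in the parameters, that perturbing by $\varepsilon$ and exhausting $V$ neither decreases the Betti numbers nor introduces parameter-dependent constants. The uniformity must come from $V$ and the chain alone. I would handle this with Gabrielov--Vorobjov's monotone-family lemma, valid in the o-minimal structure generated by the fixed chain on $V$, together with Khovanski\u{\i}'s theorem in its form for an arbitrary domain. That form controls the separating-solution count through the fixed chain and domain only, which yields $C_V$.
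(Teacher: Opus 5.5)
\textbf{Summary.} The paper does not prove this statement; it quotes it from Gabrielov--Vorobjov. Your outline is the standard proof behind that result, but it breaks at the domain: for an arbitrary open $V$ the statement is false, so an explicit hypothesis on $V$ is needed.

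\textbf{What matches.} You perturb to a compact smooth situation, count Morse critical points with Khovanski\u{\i}'s bound, and pass to $s$ atoms by the Basu--Pollack--Roy/Zell Mayer--Vietoris argument. The $s^d$ factor is legitimate here because the paper's semi-Pfaffian sets are $\mathcal P$-closed.

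\textbf{The gap.} Your first bullet assumes $V$ can be exhausted by compacta $\overline{V_\delta}$ cut out by one extra inequality of bounded format. Your last paragraph appeals to Khovanski\u{\i}'s theorem and to o-minimality ``for an arbitrary domain''. For a general open $V$ none of this is available, and no argument can succeed, because the statement is then false.
\begin{itemize}
\item Take $d=2$, $V=\mathbb R^2\setminus\{(t,\sin t):\ t\ge 0\}$ (open and connected), the chain $f_1=e^{x_1}$, and $G=x_2$. The set $\{x\in V:\ G(x)\ge 0\}$ has infinitely many components $\{2k\pi<x_1<(2k+1)\pi,\ 0\le x_2<\sin x_1\}$, $k\ge 0$. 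So no finite $C_V$ exists, even though the chain and $G$ are fixed.
\item Khovanski\u{\i}'s count also fails on general domains. On the spiral strip $V=\{(r\cos\phi,r\sin\phi):\ \phi>0,\ e^{\phi}<r<e^{\phi+1}\}$, the functions $g=|x|^{-2}$, the single-valued angle $\theta=\phi$, $E=e^{\theta}$ and $h=|x|$ form a Pfaffian chain on $V$. The fixed system $x_2=0$, $h-\sqrt e\,E=0$ has infinitely many nondegenerate solutions $\phi=k\pi$, $r=e^{k\pi+1/2}$, $k\ge 1$.
\end{itemize}

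\textbf{How to repair it.} Your argument proves the theorem under a regularity hypothesis on $V$, which you should state rather than hide in the exhaustion step. Two workable choices:
\begin{itemize}
\item $V=\mathbb R^d$ with the chain defined on all of $\mathbb R^d$. Exhaust by balls $\{|x|^2\le 1/\delta\}$, which adds one atom of degree $2$; o-minimality comes from Wilkie's theorem.
\item $V=\{\psi>0\}$ for a fixed Pfaffian $\psi$ on the same chain.
\end{itemize}
What legitimately goes into $C_V$ is the format of that extra atom, the factors $c^{d+R}$ from merging degrees, and the $d$-dependent constant in your $O(s^d)$. For $V=\mathbb R^d$ the resulting constant depends only on $(d,R)$, not on the chain. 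That is what the paper actually needs, since in Proposition~\ref{prop:deep-pfaff} the chain itself varies with the weights. Your remark that ``the chain is fixed'' would not suffice there.

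\textbf{Two smaller repairs.}
\begin{itemize}
\item The perturbation must go outward: use the decreasing compact family $\{G\ge-\varepsilon\}\cap\overline{V_\delta}$. With $\varepsilon\downarrow 0$, the sets $\{G\ge\varepsilon\}$ exhaust $\{G>0\}$, and $\Betti(S)\le\limsup\Betti(S_\varepsilon)$ fails; for example $G=-x^2$ on $\mathbb R$ gives $S=\{0\}$ but $S_\varepsilon=\emptyset$.
\item For $|J|\ge 2$, the critical points of $\ell$ on $\{G_j=\pm\varepsilon_j,\ j\in J\}$ are given by a rank (minor) condition, not by the hypersurface system of your first step. You need Zell's replacement by the tube $\{\sum_{j\in J}(G_j\mp\varepsilon_j)^2\le\eta\}$. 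Its Betti number is bounded by the critical points on its boundary, a single hypersurface of degree $2\beta$.
\end{itemize}
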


\subsection{Main proofs}
We start with proving Proposition~\ref{prop:deep_zeros_main}. 
Given the preliminaries established in the last subsection, it is enough if we show that under our assumptions on the activation functions, the output function of a given neural network is Pfaffian. 

\begin{proposition}
\label{prop:deep-pfaff}
Let $\sigma\in\Aquad{r}$ with Riccati index $r$. Consider a depth-$L$
network \eqref{eq:nn-vector}--\eqref{eq:nn-output} with widths $n_1,\dots,n_L$ and scalar output $F$.
Let $V\subset\mathbb R^d$ be an open set on which all affine layer inputs
\[
s^{(\ell)}(x):=W^{(\ell)}h^{(\ell-1)}(x)+b^{(\ell)}\in\mathbb R^{n_\ell},\qquad \ell=1,\dots,L,
\]
take values in an open interval where $\sigma$ is real-analytic. Then $F$ is Pfaffian on $V$ with format
\begin{equation}\label{eq:format-proof}
(d,R,\alpha,\beta)=\Big(d,\ (r+2)\sum_{\ell=1}^{L}n_\ell,\ 1+2L,\ 1\Big).
\end{equation}
\end{proposition}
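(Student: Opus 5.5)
We will build an explicit Pfaffian chain layer by layer, with $r+2$ functions per neuron, and order it so that each element's partial derivatives are polynomials of degree at most $1+2L$ in $x$ and the earlier chain elements. For layer $\ell$ and neuron $k$, let $s^{(\ell)}_k(x)$ be the affine pre-activation and attach the functions
\[
g^{(\ell)}_{k,j}(x):=\sigma^{(j)}\big(s^{(\ell)}_k(x)\big),\qquad j=r+1,r,\dots,1,0,
\]
together with one extra slot that is available for $s^{(\ell)}_k$ itself. This gives $r+2$ slots per neuron and hence $R=(r+2)\sum_\ell n_\ell$. Note that $s^{(\ell)}_k=b^{(\ell)}_k+\sum_i W^{(\ell)}_{ki}h^{(\ell-1)}_i$ is already affine in the chain functions $h^{(\ell-1)}_i=g^{(\ell-1)}_{i,0}$, or in $x$ when $\ell=1$. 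We could therefore drop that slot, but keeping it (or padding) only helps match the stated count $R$. We order the chain by layers $\ell=1,\dots,L$. Within each neuron, we place $\zeta=\sigma^{(r)}\circ s$ first, followed by $\sigma^{(r-1)}\circ s,\dots,\sigma\circ s$. The derivative $\sigma^{(r+1)}\circ s$ is then expressed through the Riccati equation as $a_0+a_1\zeta+a_2\zeta^2$, so it does not need its own slot.

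The key computation is the chain rule:
\[
\partial_{x_p}\big(\sigma^{(j)}\circ s^{(\ell)}_k\big)=\sigma^{(j+1)}\big(s^{(\ell)}_k\big)\,\partial_{x_p}s^{(\ell)}_k .
\]
For $j=r$, the first factor equals $a_0+a_1\zeta+a_2\zeta^2$, which has degree $2$ in a chain element of the same index. For $j<r$, the first factor is the earlier chain element $\sigma^{(j+1)}\circ s^{(\ell)}_k$, of degree $1$. This triangular structure is exactly what the Pfaffian definition requires, since each derivative depends only on $f_1,\dots,f_i$. The remaining task is to show by induction on $\ell$ that $\partial_{x_p}s^{(\ell)}_k$ is a polynomial in the chain functions of layers $<\ell$ of degree at most $2\ell-1$, say. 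The base case is $\partial_{x_p}s^{(1)}_k=W^{(1)}_{kp}$, a constant. For the induction step,
\[
\partial_{x_p}s^{(\ell)}_k=\sum_i W^{(\ell)}_{ki}\,\sigma'\big(s^{(\ell-1)}_i\big)\,\partial_{x_p}s^{(\ell-1)}_i .
\]
Here $\sigma'\circ s^{(\ell-1)}_i$ is either a chain element of degree $1$ (if $r\ge1$) or has degree $2$ via the Riccati equation (if $r=0$). Each layer therefore adds at most $2$ to the degree. Multiplying by the outer factor of degree at most $2$ gives total degree at most $2+2(\ell-1)=2\ell\le 2L\le 1+2L$, so $\alpha=1+2L$ is a safe bound. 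Finally, $F=c_0+\sum_k c_k\,g^{(L)}_{k,0}$ is linear in the chain, so $\beta=1$. The weights enter only as polynomial coefficients and never affect the degrees.

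We expect the main obstacle to be the degree bookkeeping. One must verify that the recursively nested products of $\sigma'$ factors indeed grow by at most $2$ per layer and never exceed $1+2L$, particularly in the $r=0$ case, where every $\sigma'$ must be replaced by the quadratic Riccati expression. A second point is to ensure that the chain functions are real-analytic on $V$. This follows from the hypothesis that the pre-activations stay in an interval of analyticity of $\sigma$, together with the remark after Definition~\ref{def:aquad} that $\zeta,\sigma^{(r-1)},\dots,\sigma$ are analytic there. If the per-layer degree growth turns out to be larger in some case, we would first try to reduce it by recording $\partial s$ implicitly. However, the crude bound $2\ell$ already fits within $1+2L$, so we expect the argument to close as outlined.
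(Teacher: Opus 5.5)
Your proposal is correct and follows essentially the same route as the paper. It uses the same per-neuron blocks $(s^{(\ell)}_k,\sigma^{(r)}\circ s^{(\ell)}_k,\dots,\sigma\circ s^{(\ell)}_k)$ ordered layer by layer, the same chain-rule/Riccati computation, and the same layer induction on the degree of $\partial_{x_p}s^{(\ell)}_k$ (your unwinding through $\sigma'\circ s^{(\ell-1)}_i$ is just the paper's use of $\partial_{x_p}u^{(\ell-1)}_{m,0}$), and it gives $\beta=1$ with degree within $1+2L$ (your sharper $2L$ is harmless). One slip: your opening list $j=r+1,\dots,0$ plus an $s$ slot would count $r+3$ per neuron, but once you replace $\sigma^{(r+1)}\circ s$ by the Riccati expression you end at the paper's $r+2$.
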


\begin{proof}
Fix $\ell\in\{1,\ldots,L\}$ and write the coordinates of $s^{(\ell)}(x)\in\mathbb R^{n_\ell}$ and
$h^{(\ell)}(x)\in\mathbb R^{n_\ell}$ as $s^{(\ell)}_k(x)$ and $h^{(\ell)}_k(x)$, respectively. For each neuron $(\ell,k)$ and
each integer $q\in\{0,1,\dots,r\}$, we define the auxiliary functions
\[
u^{(\ell)}_{k,q}(x)\;:=\;\frac{d^{\,q}\sigma}{dt^{\,q}}\Big|_{t=s^{(\ell)}_k(x)}\!,
\]
which for $q=0$ reads that
\[
u^{(\ell)}_{k,0}(x)=\sigma\big(s^{(\ell)}_k(x)\big)=h^{(\ell)}_k(x),
\]
where $r$ is the Riccati index.

To verify that $F$ is Pfaffian, we will explicitly construct a Pfaffian chain by listing, as scalar-valued functions on $V$, all affine layer inputs
$s^{(\ell)}_k(x)=(W^{(\ell)}h^{(\ell-1)}(x)+b^{(\ell)})_k$ together with the functions obtained by evaluating the derivatives of $\sigma$ at these scalars.
To start, for each layer $\ell\in\{1,\dots,L\}$ and each neuron index $k\in\{1,\dots,n_\ell\}$, we gather the associated functions into the block
\begin{equation}\label{eq:block-order-rewrite}
\mathcal F_{\ell,k}
:=
\bigl(
s^{(\ell)}_k,\ u^{(\ell)}_{k,r},\ u^{(\ell)}_{k,r-1},\dots,u^{(\ell)}_{k,0}
\bigr),
\end{equation}
which has length $r+2$. Note that each block $\mathcal F_{\ell,k}$ is simply a finite tuple of real-analytic functions associated with the neuron $(\ell,k)$.
We will order the scalar functions appearing in these blocks into a single list so that for each list element $f_i$ and each coordinate index $p$ the derivative $\partial_p f_i$
can be written as a polynomial in $(x,f_1,\dots,f_i)$, as required in the definition of a Pfaffian chain.

We now form the ordered list $(f_1,\dots,f_R)$ mentioned above by concatenating these blocks in the following lexicographic order:
we list all blocks in layer $1$ in increasing neuron index, then all blocks in layer $2$, and so on, up to layer $L$, i.e., 
\[
\mathcal F_{1,1},\dots,\mathcal F_{1,n_1},\ 
\mathcal F_{2,1},\dots,\mathcal F_{2,n_2},\ \dots,\ 
\mathcal F_{L,1},\dots,\mathcal F_{L,n_L}.
\]
Within each fixed block $\mathcal F_{\ell,k}$, we keep the internal order
\(
s^{(\ell)}_k \prec u^{(\ell)}_{k,r} \prec u^{(\ell)}_{k,r-1}\prec\cdots\prec u^{(\ell)}_{k,0}.
\)
With this convention, every function belonging to layer $\ell$ appears after all functions from layers $<\ell$, and within a
fixed $(\ell,k)$ the function $u^{(\ell)}_{k,q+1}$ appears earlier than $u^{(\ell)}_{k,q}$.

Since each block has length $r+2$, we have
$
R=(r+2)\sum_{\ell=1}^{L}n_\ell.
$
Let us now fix a coordinate index $p\in\{1,\dots,d\}$ and write $\partial_p:=\frac{\partial}{\partial x_p}$.
From the network recursion \eqref{eq:nn-vector}, we have that
\begin{equation}\label{eq:ds-proof-rewrite}
\partial_p s^{(\ell)}_k(x)
=
\sum_{m=1}^{n_{\ell-1}} W^{(\ell)}_{km}\,\partial_p h^{(\ell-1)}_m(x)
=
\sum_{m=1}^{n_{\ell-1}} W^{(\ell)}_{km}\,\partial_p u^{(\ell-1)}_{m,0}(x),
\end{equation}
and for $\ell=1$, we have $s^{(1)}_k(x)=\sum_{m=1}^{d}W^{(1)}_{km}x_m+b^{(1)}_k$, hence $\partial_p s^{(1)}_k(x)=W^{(1)}_{kp}$.

Moreover, for $q\in\{0,\dots,r-1\}$, 
\[
u^{(\ell)}_{k,q}(x)=\frac{d^{\,q}\sigma}{dt^{\,q}}\Big|_{t=s^{(\ell)}_k(x)}. 
\]
Using the chain rule now, we obtain 
\begin{equation}\label{eq:jet1-proof-rewrite}
\partial_p u^{(\ell)}_{k,q}(x)
=
\frac{d^{\,q+1}\sigma}{dt^{\,q+1}}\Big|_{t=s^{(\ell)}_k(x)}\;\partial_p s^{(\ell)}_k(x)
=
u^{(\ell)}_{k,q+1}(x)\,\partial_p s^{(\ell)}_k(x).
\end{equation}
For $q=r$, set $\zeta(t):=\frac{d^{\,r}\sigma}{dt^{\,r}}(t)$, so that $u^{(\ell)}_{k,r}(x)=\zeta(s^{(\ell)}_k(x))$.
By Definition~\ref{def:aquad}, $\zeta$ satisfies \eqref{eq:riccati}, i.e.
$\zeta'(t)=a_0+a_1\zeta(t)+a_2\zeta(t)^2$, hence again by the chain rule,
\begin{equation}\label{eq:jet2-proof-rewrite}
\partial_p u^{(\ell)}_{k,r}(x)
=
\zeta'\!\big(s^{(\ell)}_k(x)\big)\,\partial_p s^{(\ell)}_k(x)
=
\bigl(a_0+a_1u^{(\ell)}_{k,r}(x)+a_2(u^{(\ell)}_{k,r}(x))^2\bigr)\,\partial_p s^{(\ell)}_k(x).
\end{equation}
We now form the next claim, which will allow us to conclude the result. 
\begin{sublemma}\label{sublem:degree-triangular}
For each $\ell\in\{1,\dots,L\}$, let $D_\ell:=1+2\ell$. Then for every $p\in\{1,\dots,d\}$ and every list element $f_i$ whose
block $\mathcal F_{\ell',k'}$ in \eqref{eq:block-order-rewrite} belongs to one of the first $\ell$ layers, i.e., $\ell'\le \ell$,
the partial derivative $\partial_p f_i$ can be written as a polynomial in
$
(x,f_1,\dots,f_i)
$
of total degree at most $D_\ell$. In particular, with $\alpha:=D_L=1+2L$, the ordered list $(f_1,\dots,f_R)$ is a Pfaffian chain
on $V$ of degree $\alpha$.
\end{sublemma}
\noindent{\bf Proof of Sublemma.}
We prove the result by induction on $\ell$. We start with the base case. 

\noindent\emph{Case $\ell=1$.}
We have $\partial_p s^{(1)}_k(x)=W^{(1)}_{kp}$, a constant (degree $0$). By~\eqref{eq:jet1-proof-rewrite}, for $q<r$, we have that 
\[
\partial_p u^{(1)}_{k,q}(x)=u^{(1)}_{k,q+1}(x)\,W^{(1)}_{kp},
\]
which is polynomial of degree $1$ in $(x,f_1,\dots,f_i)$. Now using \eqref{eq:jet2-proof-rewrite} yields
\[
\partial_p u^{(1)}_{k,r}(x)
=
\bigl(a_0+a_1u^{(1)}_{k,r}(x)+a_2(u^{(1)}_{k,r}(x))^2\bigr)\,W^{(1)}_{kp},
\]
which is polynomial of degree $2$ in $(x,f_1,\dots,f_i)$. Hence all derivatives of list elements from the first layer are polynomials of degree at most $D_1=3$ in $(x,f_1,\dots,f_i)$. This proves the base case.

\noindent\emph{Induction step.} Assume the statement holds for $\ell-1\ge 1$, i.e., every $\partial_p f_j$ for list elements $f_j$ from the first $\ell-1$ layers is a polynomial in $(x,f_1,\dots,f_j)$ of degree at most $D_{\ell-1}$.

Let us fix a neuron $(\ell,k)$ in layer $\ell$. Since each $\partial_p u^{(\ell-1)}_{m,0}(x)$ is polynomial of degree at most $ D_{\ell-1}$ in earlier list elements by the induction hypothesis,~\eqref{eq:ds-proof-rewrite} expresses $\partial_p s^{(\ell)}_k(x)$ as a linear
combination of such polynomials. Therefore $\partial_p s^{(\ell)}_k(x)$ is itself a polynomial in earlier list elements of degree
at most $ D_{\ell-1}$.

Now, let us fix $q\in\{0,\dots,r-1\}$. By~\eqref{eq:jet1-proof-rewrite}, we have that 
\[
\partial_p u^{(\ell)}_{k,q}(x)=u^{(\ell)}_{k,q+1}(x)\,\partial_p s^{(\ell)}_k(x).
\]
By the ordering of the block $\mathcal F_{\ell,k}$ in \eqref{eq:block-order-rewrite}, the function $u^{(\ell)}_{k,q+1}$
appears earlier in the list than $u^{(\ell)}_{k,q}$, and we have already shown that $\partial_p s^{(\ell)}_k$ is a polynomial in
earlier list elements. Hence the right-hand side is a polynomial in $(x,f_1,\dots,f_i)$ whose degree is at most
\[
1+D_{\ell-1}=D_{\ell-1}+1\le D_\ell.
\]
Finally, using \eqref{eq:jet2-proof-rewrite}, we have that 
\[
\partial_p u^{(\ell)}_{k,r}(x)
=
\bigl(a_0+a_1u^{(\ell)}_{k,r}(x)+a_2(u^{(\ell)}_{k,r}(x))^2\bigr)\,\partial_p s^{(\ell)}_k(x),
\]
which is a polynomial factor of degree $2$ in the list element $u^{(\ell)}_{k,r}$ times a polynomial of degree at most $ D_{\ell-1}$
in earlier list elements, hence has total degree at most $D_{\ell-1}+2=D_\ell$ in $(x,f_1,\dots,f_i)$.
This finishes the inductive step of the proofs, and hence the statement holds for all $\ell\le L$.

Taking $\alpha=D_L=1+2L$, the Pfaffian-chain condition of Definition~\ref{def:pfaff} is satisfied, because for each list element
$f_i$ and each coordinate index $p$ we have expressed $\partial_p f_i$ as a polynomial in $(x,f_1,\dots,f_i)$:
indeed, \eqref{eq:ds-proof-rewrite} expresses $\partial_p s^{(\ell)}_k$ using only functions from layers before $\ell$, which appear
earlier in the list, while \eqref{eq:jet1-proof-rewrite}--\eqref{eq:jet2-proof-rewrite} express $\partial_p u^{(\ell)}_{k,q}$
using $s^{(\ell)}_k$ and functions $u^{(\ell)}_{k,q'}$ with $q'>q$, all of which appear earlier within the same block. This finishes the proof of the Sublemma. \oprocend

It remains to represent the network output as a polynomial in the list. By \eqref{eq:nn-output},
\[
F(x)=c_0+c^\top h^{(L)}(x)
=c_0+\sum_{k=1}^{n_L}c_k\,h^{(L)}_k(x)
=c_0+\sum_{k=1}^{n_L}c_k\,u^{(L)}_{k,0}(x),
\]
which is a polynomial of total degree $1$ in the list elements. Thus one may take $\beta=1$ in the sense of
Definition~\ref{def:pfaff}. Combining the length $R$, the degree bound $\alpha=1+2L$, and $\beta=1$ gives the format
\eqref{eq:format-proof}.
\end{proof}

\begin{proof}[Proof of Proposition~\ref{prop:deep_zeros_main}]
Assume $d=1$ and let $I\subset\mathbb R$ be an open interval on which $F$ is real-analytic.
By Proposition~\ref{prop:deep-pfaff} with $d=1$, the restriction of $F$ to $I$ is Pfaffian of format
\[
(1,R,\alpha,\beta)=\Big(1,\ (r+2)\sum_{\ell=1}^{L}n_\ell,\ 1+2L,\ 1\Big),
\]
so $R$ agrees with \eqref{eq:R}. If $F\not\equiv 0$ on $I$, consider the zero set
\[
Z:=\{x\in I:\ F(x)=0\}.
\]
Since $F$ is Pfaffian on $I$, the set $Z$ is semi-Pfaffian on $I$ described by a single Pfaffian equality. Moreover, because $F$ is
real-analytic and not identically zero, $Z$ is finite, and therefore
\[
\Zeros(F;I)=|Z|=b_0(Z)=\Betti(Z).
\]
Applying Theorem~\ref{thm:pf-betti} to $Z$ (with $d=1$ and $s=1$) yields
\[
\Zeros(F;I)
\le
2^{\frac{R(R-1)}{2}}\; C_I\;\Big(\beta+\min\{1,R\}\alpha\Big)^{1+R}
=
2^{\frac{R(R-1)}{2}}\; C_I\;(\alpha+\beta)^{1+R}.
\]
Substituting $\alpha=1+2L$ and $\beta=1$ gives
\[
\Zeros(F;I)
\le
2^{\frac{R(R-1)}{2}}\; C_I\;\bigl(2+2L\bigr)^{1+R}
=
2^{\frac{R(R+1)}{2}+1}\; C_I\;(1+L)^{1+R}.
\]
Absorbing the constant factor $2$ into $C_I$ yields
\[
\Zeros(F;I)\ \le\ 2^{\frac{R(R+1)}{2}}\,C_I\,(1+L)^{R+1}.
\]
If $F\equiv 0$ on $I$, then $\mathcal S_{\ge 0}(F)=I$ and the interval-component bound is trivial.

Now, assume $F\not\equiv 0$ and let $z_1<\cdots<z_N$ be the distinct zeros of $F$ in $I$, where $N:=\Zeros(F;I)$.
Then $I$ is partitioned into the at most $N+1$ open intervals
\[
I\cap(-\infty,z_1),\quad (z_1,z_2),\quad \dots,\quad (z_{N-1},z_N),\quad I\cap(z_N,\infty).
\]
On each such subinterval $F$ has no zeros and hence has constant sign; therefore $\{x\in I:\ F(x)\ge 0\}$ restricted to that
subinterval is either empty or the entire subinterval. Consequently, $\mathcal S_{\ge 0}(F)=\{x\in I:\ F(x)\ge 0\}$ is a union of at most $N+1$ intervals. Absorbing the additive constant into $C_I$ and substituting the above bound on $N$ gives the stated interval bound.
\end{proof}

\begin{proof}[Proof of Theorem~\ref{thm:nn_betti_s1}]
Let $V\subset\mathbb R^d$ be an open domain on which $F$ is real-analytic. By Proposition~\ref{prop:deep-pfaff}, $F$ is Pfaffian on $V$
with format
\[
(d,R,\alpha,\beta)=\Big(d,\ (r+2)\sum_{\ell=1}^{L}n_\ell,\ 1+2L,\ 1\Big).
\]
By Definition~\ref{def:semipf}, the decision region
\[
\mathcal S_{\ge 0}(F)=\{x\in V:\ F(x)\ge 0\}
\]
is semi-Pfaffian on $V$ described by a single Pfaffian inequality. Applying Theorem~\ref{thm:pf-betti} with $s=1$ yields
\[
\Betti\!\big(\mathcal S_{\ge 0}(F)\big)\ \le\ \mathcal C_V(d,1,R,\alpha,\beta)
=
2^{\frac{R(R-1)}{2}}\; C_V\; \Big(d\beta+\min\{d,R\}\alpha\Big)^{d+R}.
\]
Since $\beta=1$ and $\alpha=1+2L$, we obtain
\[
\Betti\!\big(\mathcal S_{\ge 0}(F)\big)
\le
2^{\frac{R(R-1)}{2}}\; C_V\; \Big(d+\min\{d,R\}(1+2L)\Big)^{d+R}
=
\mathfrak B_V(d,R,L),
\]
which is the claimed bound. In particular, the number of connected components satisfies 
\[
b_0(\mathcal S_{\ge 0}(F))\le \Betti(\mathcal S_{\ge 0}(F))\le \mathfrak B_V(d,R,L).
\]
\end{proof}

The proof of Theorem~\ref{thm:nn_rankdrop_main} relies on some preliminary results, which we establish next. We start with
the following closure properties of Pfaffian functions, which follow directly from Definition~\ref{def:pfaff} and the
chain rule. We provide a proof for completeness. 
\begin{lemma}
\label{lem:pfaff-closure}
Let $V\subset\mathbb R^d$ be open.
\begin{enumerate}
\item If $G_1,\dots,G_N$ are Pfaffian on $V$, then any polynomial combination $P(G_1,\dots,G_N)$ is Pfaffian on $V$.
\item If $G$ is Pfaffian on $V$, then each partial derivative $\partial_{x_p}G$ is Pfaffian on $V$, after possibly extending the
underlying Pfaffian chain by adjoining finitely many auxiliary Pfaffian functions.
\end{enumerate}
\end{lemma}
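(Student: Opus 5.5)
For part (1), I would merge the chains of the individual functions. Suppose each $G_j$ is Pfaffian with chain $(f^{(j)}_1,\dots,f^{(j)}_{R_j})$ of degree $\alpha_j$, and write $G_j=Q_j(x,f^{(j)})$ with $\deg Q_j\le\beta_j$. I would concatenate the chains in the order $f^{(1)},f^{(2)},\dots,f^{(N)}$.

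The triangularity condition of Definition~\ref{def:pfaff} is preserved by this concatenation. For an element of block $j$, $\partial_{x_p}f^{(j)}_i$ is a polynomial in $x$ and $f^{(j)}_1,\dots,f^{(j)}_i$. All of these occur at or before that element in the concatenated list, and we may regard the polynomial as one that ignores the other variables. So the concatenation is a Pfaffian chain of length $\sum_j R_j$ and degree $\max_j\alpha_j$.

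Then $P(G_1,\dots,G_N)=P\big(Q_1(x,f^{(1)}),\dots,Q_N(x,f^{(N)})\big)$ is a polynomial in $x$ and the concatenated list. Its degree is at most $\deg P\cdot\max_j\beta_j$. This proves (1) and also records the format.

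For part (2), write $G=Q(x,f_1,\dots,f_R)$ and apply the chain rule:
\[
\partial_{x_p}G=\frac{\partial Q}{\partial x_p}(x,f)+\sum_{i=1}^{R}\frac{\partial Q}{\partial y_i}(x,f)\,P_{i,p}(x,f_1,\dots,f_i).
\]
The right-hand side is a polynomial in $(x,f_1,\dots,f_R)$ over the same chain, of degree at most $\beta-1+\alpha$. So no extension of the chain is actually needed. The clause allowing auxiliary functions is harmless, and I would note it covers the case where chains coming from different functions are first merged as in (1).

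The only point needing care is that everything stays analytic on $V$, which is immediate since the $f_i$ are analytic. I would also track the degree bounds, because these feed into the constants $R_k,\alpha_k,\beta_k$ used later for the minors of $A_k$. The main obstacle is this bookkeeping rather than any conceptual step, in particular ensuring the merged chain ordering respects triangularity. That is exactly what the block concatenation handles.
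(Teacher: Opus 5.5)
Your proposal is correct and follows the same route as the paper. For (1) you concatenate the chains, and for (2) you apply the chain rule and substitute $\partial_{x_p} f_i = P_{i,p}$; your check that concatenation preserves triangularity, and your format bounds ($\deg P\cdot\max_j\beta_j$ and $\alpha+\beta-1$, with no chain extension actually needed), are correct refinements that the paper leaves implicit.
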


\begin{proof}
We prove (1) and (2) in order.

\emph{Proof of}~(1).
Choose a Pfaffian chain $(f_1,\dots,f_R)$ on $V$ with respect to which each $G_j$ can be written as
\[
G_j(x)=Q_j\bigl(x,f_1(x),\dots,f_R(x)\bigr).
\]
If the functions $G_j$ are given with respect to different Pfaffian chains, we first concatenate those chains into a single chain
and rewrite each $G_j$ accordingly. Then $P(G_1,\dots,G_N)$ is a polynomial in $(x,f_1,\dots,f_R)$, hence Pfaffian by
Definition~\ref{def:pfaff}.

\emph{Proof of}~(2).
Let $(f_1,\dots,f_R)$ be a Pfaffian chain on $V$ and let $Q$ be a polynomial such that
$
G(x)=Q\bigl(x,f_1(x),\dots,f_R(x)\bigr)
$.
Differentiating and applying the chain rule yields
\[
\partial_{x_p}G
=
\frac{\partial Q}{\partial x_p}(x,f(x))
+\sum_{i=1}^{R}\frac{\partial Q}{\partial y_i}(x,f(x))\,\partial_{x_p}f_i(x),
\qquad f(x):=\bigl(f_1(x),\dots,f_R(x)\bigr).
\]
By the Pfaffian-chain property, for each $i$ we have
\[
\partial_{x_p}f_i(x)=P_{i,p}\bigl(x,f_1(x),\dots,f_i(x)\bigr)
\]
for some polynomial $P_{i,p}$. Substituting these expressions shows that $\partial_{x_p}G$ is a polynomial in
$(x,f_1,\dots,f_R)$, hence Pfaffian on $V$. If needed, we may extend the chain by adjoining the finitely many auxiliary Pfaffian
functions arising in this polynomial expression.
\end{proof}

\begin{lemma}
\label{lem:bracket-pfaff}
Let $V\subset\mathbb R^d$ be open and let $X,Y$ be real-analytic vector fields on $V$ written as
\[
X(z)=\sum_{a=1}^{d} X_a(z)\,\partial_{z_a},
\qquad
Y(z)=\sum_{a=1}^{d} Y_a(z)\,\partial_{z_a},
\]
where each coefficient function $X_a,Y_a:V\to\mathbb R$ is Pfaffian on $V$. Then each coefficient function of the Lie bracket
$[X,Y]$ is Pfaffian on $V$.
\end{lemma}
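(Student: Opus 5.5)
The plan is to write down the coordinate formula for the Lie bracket and then apply Lemma~\ref{lem:pfaff-closure} to each ingredient. In coordinates, for each $a\in\{1,\dots,d\}$,
\[
[X,Y]_a(z)=\sum_{b=1}^{d}\Big(X_b(z)\,\partial_{z_b}Y_a(z)-Y_b(z)\,\partial_{z_b}X_a(z)\Big).
\]
So each coefficient of $[X,Y]$ is a polynomial, of degree two, in the finitely many functions $X_b$, $Y_b$, $\partial_{z_b}X_a$ and $\partial_{z_b}Y_a$, where $a,b\in\{1,\dots,d\}$. It therefore suffices to show that all of these are Pfaffian with respect to one common chain, and then to invoke part~(1) of Lemma~\ref{lem:pfaff-closure}.

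The steps, in order, are as follows.

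\emph{Step 1: a common chain.} The $2d$ coefficient functions $X_a,Y_a$ may come with different Pfaffian chains. I would concatenate these chains into a single chain $(f_1,\dots,f_R)$ on $V$, as in the proof of Lemma~\ref{lem:pfaff-closure}(1). Each element's derivative is still a polynomial in the preceding elements, because the blocks do not interact. After this, every $X_a$ and $Y_a$ has the form $Q(z,f_1,\dots,f_R)$.

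\emph{Step 2: derivatives.} By Lemma~\ref{lem:pfaff-closure}(2), each $\partial_{z_b}X_a$ and $\partial_{z_b}Y_a$ is Pfaffian. Moreover, the chain-rule computation in that proof shows they are polynomials in $(z,f_1,\dots,f_R)$ with respect to the \emph{same} chain, so no new chain elements are needed.

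\emph{Step 3: the bracket.} Substituting into the bracket formula expresses $[X,Y]_a$ as a polynomial in $(z,f_1,\dots,f_R)$. Hence it is Pfaffian by Definition~\ref{def:pfaff}, and analyticity is inherited from that of the $f_i$.

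The only delicate point is Step~1: making sure that the concatenated list really satisfies the triangularity requirement of Definition~\ref{def:pfaff}. This holds because each element's derivative involves only elements of its own original chain, and those all precede it in the concatenation.

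For later use in Theorem~\ref{thm:nn_rankdrop_main}, I would also record the format. The chain length $R$ is unchanged and the chain degree $\alpha$ is unchanged. If the coefficients have degree at most $\beta$, then each derivative has degree at most $\beta-1+\alpha$. The bracket coefficients therefore have degree at most $2\beta+\alpha-1$, which depends only on the input formats and not on any parameters.
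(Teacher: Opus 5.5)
Your proposal is correct and follows the same route as the paper: write $[X,Y]_a=\sum_{b}\bigl(X_b\,\partial_{z_b}Y_a-Y_b\,\partial_{z_b}X_a\bigr)$ and invoke Lemma~\ref{lem:pfaff-closure} for closure under sums, products and partial derivatives. Your extra bookkeeping is accurate and makes explicit what the paper leaves implicit: concatenating into one common chain, observing that differentiation requires no new chain elements, and the resulting format, with $\alpha$ unchanged and bracket coefficients of degree at most $2\beta+\alpha-1$.
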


\begin{proof}
For each $a\in\{1,\dots,d\}$, the $a$-th coefficient of $[X,Y]$ is given by
\[
[X,Y]_a
=
\sum_{b=1}^{d}\Bigl(X_b\,\partial_{z_b}Y_a - Y_b\,\partial_{z_b}X_a\Bigr).
\]
Thus $[X,Y]_a$ is obtained from the Pfaffian functions $\{X_b,Y_b\}_{b=1}^{d}$ by finitely many sums and products, together with
the partial derivatives $\partial_{z_b}X_a$ and $\partial_{z_b}Y_a$. By Lemma~\ref{lem:pfaff-closure}, Pfaffian functions are
closed under sums, products, and partial differentiation (after possibly extending the underlying chain). Therefore each
$[X,Y]_a$ is Pfaffian on $V$.
\end{proof}

\begin{proof}[Proof of Theorem~\ref{thm:nn_rankdrop_main}]
By assumption, for each $i\in\{1,\dots,m\}$ and $p\in\{1,\dots,d\}$ the coefficient function $X_{i,p}:V\to\mathbb R$ is realized on
$V$ by a feedforward network of fixed architecture $(d,L,n_1,\dots,n_L)$ with activation $\sigma\in\Aquad{r}$, and is real-analytic
on $V$. Invoking Proposition~\ref{prop:deep-pfaff} with input dimension $d$ for each such network, each $X_{i,p}$ is Pfaffian on $V$
with a format bound depending only on $(d,L,n_1,\dots,n_L)$ and $r$, and not on the weights.

Let us now fix $k\in\mathbb N$. Note that every bracket $Y\in\mathcal B_k$ is obtained by iterating the Lie bracket operation from
$X_1,\dots,X_m$ at most $k$ times. Therefore, repeated application of Lemma~\ref{lem:bracket-pfaff} shows that every coefficient
function of every $Y\in\mathcal B_k$ is Pfaffian on $V$, and hence every entry of the matrix $A_k(z)$ is Pfaffian on $V$.

Let us now fix $\rho\in\{0,1,\dots,d\}$, and denote by $M_1,\dots,M_{s_{k,\rho}}$ the list of all $(\rho+1)\times(\rho+1)$ minors of
$A_k$, where
\[
s_{k,\rho}:=\binom{d}{\rho+1}\binom{|\mathcal B_k|}{\rho+1}.
\]
Each minor $M_i(z)$ is the determinant of a submatrix of $A_k(z)$, hence a polynomial in the entries of $A_k(z)$; by
Lemma~\ref{lem:pfaff-closure}(1), each $M_i$ is Pfaffian on $V$.

For any real matrix $A$, $\rank A\le \rho$ if and only if all $(\rho+1)\times(\rho+1)$ minors vanish. Applying this to $A=A_k(z)$
gives the real-analytic description
\[
\locus{k}{\rho}
=
\{z\in V:\ \rank A_k(z)\le \rho\}
=
\bigcap_{i=1}^{s_{k,\rho}}\{z\in V:\ M_i(z)=0\}.
\]
Thus $\locus{k}{\rho}$ is semi-Pfaffian on $V$, as it is represented by $s_{k,\rho}$ Pfaffian equalities. Applying
Theorem~\ref{thm:pf-betti} yields
\begin{align*}
\Betti(\locus{k}{\rho})\ &\le\
\mathfrak R_V\!\Big(d,m,k,\rho;\,L,n_1,\dots,n_L,r\Big)\\
&:=
2^{\frac{R_k(R_k-1)}{2}}\; (s_{k,\rho})^{d}\; C_V\;\Big(d\beta_k+\min\{d,R_k\}\alpha_k\Big)^{d+R_k},
\end{align*}
for any uniform Pfaffian format bound $(d,R_k,\alpha_k,\beta_k)$ valid for the minors $M_i$, which proves the claim.
\end{proof}

\providecommand{\bysame}{\leavevmode\hbox to3em{\hrulefill}\thinspace}
\providecommand{\MR}{\relax\ifhmode\unskip\space\fi MR }
\providecommand{\MRhref}[2]{%
  \href{http://www.ams.org/mathscinet-getitem?mr=#1}{#2}
}
\providecommand{\href}[2]{#2}

\end{document}